\documentclass[letterpaper]{article} %
\usepackage{aaai25}  %
\usepackage{times}  %
\usepackage{helvet}  %
\usepackage{courier}  %
\usepackage[hyphens]{url}  %
\usepackage{graphicx} %
\urlstyle{rm} %
\usepackage{natbib}  %
\usepackage{caption} %
\frenchspacing  %
\setlength{\pdfpagewidth}{8.5in} %
\setlength{\pdfpageheight}{11in} %
\usepackage{algorithm}
\usepackage{algorithmic}

\usepackage{newfloat}
\usepackage{listings}
\DeclareCaptionStyle{ruled}{labelfont=normalfont,labelsep=colon,strut=off} %
\lstset{%
	basicstyle={\footnotesize\ttfamily},%
	numbers=left,numberstyle=\footnotesize,xleftmargin=2em,%
	aboveskip=0pt,belowskip=0pt,%
	showstringspaces=false,tabsize=2,breaklines=true}
\floatstyle{ruled}
\newfloat{listing}{tb}{lst}{}
\floatname{listing}{Listing}
\pdfinfo{
/TemplateVersion (2025.1)
}

\setcounter{secnumdepth}{2} %

\usepackage{amsmath}
\usepackage{amssymb}
\usepackage{mathtools}
\usepackage{amsthm}
\usepackage{tikz}
\usepackage{bm}
\usepackage{booktabs}
\usepackage[bibliography=common,appendix=append]{apxproof}

\newtheorem{theorem}{Theorem}
\theoremstyle{definition}
\newtheorem{definition}{Definition}[section]
\newtheorem{example}{Example}

\newcommand{\A}{A}

\newcommand{\F}{\mathcal{F}}
\newcommand{\V}{\mathcal{V}}
\newcommand{\Ell}{\mathcal{L}}
\newcommand{\R}{\mathbb{R}}
\newcommand{\N}{\mathbb{N}}

\DeclareMathOperator{\E}{\mathbb{E}}
\newcommand{\AMC}{\text{AMC}}

\title{The Gradient of Algebraic Model Counting}

\author {
    Jaron Maene\textsuperscript{\rm 1},
    Luc De Raedt\textsuperscript{\rm 1,2}
}
\affiliations{
    \textsuperscript{\rm 1}KU Leuven, Belgium\\
    \textsuperscript{\rm 2}\"Orebro University, Sweden\\
    jaron.maene@kuleuven.be, luc.deraedt@kuleuven.be
}

%
%
\usepackage{bibentry}

\begin{document}

\maketitle

\begin{abstract}

Algebraic model counting unifies many
inference tasks on logic formulas by exploiting semirings. Rather than focusing on inference, we consider learning, especially in statistical-relational and neurosymbolic AI, which combine logical, probabilistic and neural representations. Concretely, we show that the very same semiring perspective of algebraic model counting also applies to learning.  This allows us to unify various learning algorithms by generalizing gradients and backpropagation to different semirings.
Furthermore, we show how cancellation and ordering properties of a semiring can be exploited for more memory-efficient backpropagation. This allows us to obtain some interesting variations of state-of-the-art gradient-based optimisation methods for probabilistic logical models. We also discuss why algebraic model counting on tractable circuits does not lead to more efficient second-order optimization. Empirically, our algebraic backpropagation exhibits considerable speed-ups as compared to existing approaches.
\end{abstract}
\begin{links}
     \link{Code}{https://github.com/ML-KULeuven/amc-grad}
\end{links}

\section{Introduction}

Algebraic model counting (AMC) generalizes the well-known satisfiability task on propositional logic formulas to semirings \cite{kimmig_algebraic_2017}. Using AMC various probabilistic inference tasks can be solved using the same unifying algorithm, including calculating marginals with evidence,  entropy, and the most probable explanation (MPE). The principles of AMC are reminiscent of those underlying the sum- and max-product algorithms for probabilistic graphical models \cite{friesen_sum-product_2016}.

In the current machine learning age, inference is often combined with learning. This is the focus of statistical-relational learning \cite{de_raedt_statistical_2016}  and neurosymbolic AI \cite{garcez_neurosymbolic_2023}, which aim to integrate the inference and learning paradigms of logic, probability and neural networks.
Our goal is to \emph{extend the unifying algebraic perspective that AMC brought to inference to the learning setting}. To this end, we show how gradients can be generalized over semirings, by taking inspiration from differentiable algebra. This algebraic gradient provides a new toolkit of tractable operations, which can be used to realize a wide variety of learning algorithms, including gradient descent, expectation-maximization, entropy maximization, and low variance gradient estimation.

From a theoretical perspective, the algebraic viewpoint provides a unifying framework and solver for computing many concepts that are used in learning.
This is timely, as \citet{marra_statistical_2024} relate that 
``[t]he situation in neurosymbolic computation today is very much like that of the early days in statistical relational learning, in which there were many competing formalisms, sometimes characterized as the statistical relational learning alphabet soup". \citet{ott_how_2023} even state that ``the field of neurosymbolic AI exhibits a progress-hampering level of fragmentation".  

From a practical perspective, we create a single optimized algorithm that subsumes many existing ones as special cases. For example, the forward-backward algorithm of \citet{darwiche_differential_2003} and the gradient estimator of \citet{de_smet_differentiable_2023} have a worst-case quadratic complexity in the number of nodes. However, these are special cases of our algebraic backpropagation algorithm which has linear complexity. We provide an efficient implementation for the algebraic backpropagation algorithm which takes the semiring properties into account. In our experiments, our implementation outperforms PyTorch and Jax, which are the de facto standard for neurosymbolic learning, by several orders of magnitude.

Finally, we consider algebraic learning algorithms that rely on second-order information. Indeed, empirical evidence of e.g. \citet{liu_scaling_2022} suggests that the existing first-order optimization methods for circuits might be suboptimal. Second-order optimization has quadratic complexity in general, which explains its lack of popularity in machine learning. However, specialized tractable circuit representations such as sd-DNNF have been developed which support many operations in linear time which are otherwise NP-hard \cite{darwiche_knowledge_2002}. Unfortunately, we show that this tractability does not carry over to second-order derivatives and Newton's method is unlikely to be feasible in linear time on these tractable circuits.

In summary, we make the following contributions.
\begin{enumerate}
    \item We introduce the $\nabla$AMC for computing algebraic gradients within a logical semiring framework and show that many tasks related to gradients and learning can be implemented as such an algebraic gradient.
    \item We provide an optimized $\nabla$AMC algorithm for the algebraic gradient by exploiting dynamic programming and semiring properties.
    \item The $\nabla$AMC algorithm is implemented in a user-friendly Rust library called \emph{Kompyle}, and empirically outperforms state-of-the-art algorithms used in neurosymbolic learning
    \item We prove that a second-order algebraic gradient cannot be computed by a circuit in linear time.
\end{enumerate}

\section{Preliminaries}\label{sec:background}

We first review some relevant background on abstract algebra and propositional logic, before turning to algebraic model counting and algebraic circuits.

\subsection{Algebra}

\begin{definition}[Commutative Monoid]
    A commutative monoid $(\A, \odot, e)$ is a set $\A$ with a binary operation $\odot: \A \times \A \to \A$ and an identity element $e \in \A$ such that the following properties hold for all $a$, $b$, and $c$ in $\A$.
    \begin{itemize}
        \item \emph{Associativity}: $(a \odot b) \odot c = a \odot (b \odot c)$
        \item \emph{Commutativity}: $a \odot b = b \odot a$
        \item \emph{Neutrality of Identity}: $e \odot a = a$
    \end{itemize}
\end{definition}

An element $a\in \A$ is \emph{idempotent} when $a \odot a = a$. A monoid is idempotent when all its elements are idempotent.
Note that the identity $e$ is always idempotent ($e \odot e = e$). %

\begin{definition}[Commutative Semiring]
    A commutative semiring $(\A, \oplus, \otimes, e^\oplus, e^\otimes)$ combines two commutative monoids $(\A, \oplus, e^\oplus)$ and $(\A, \otimes, e^\otimes)$ where the following properties hold for all $a$, $b$, and $c$ in $\A$.
    \begin{itemize}
        \item \emph{Distributivity}. $(a \oplus b) \otimes c = (a \otimes c) \oplus (b \otimes c)$
        \item \emph{Absorption}. $e^\oplus \otimes a = e^\oplus$
    \end{itemize}
\end{definition}

We will from on now write monoid or semiring for brevity, and leave the commutativity implied.
A \emph{ring} is a semiring with additive inverses, meaning that for every $a$ in $\A$, there is an inverse element $-a$ such that $a \oplus -a = e^{\oplus}$. 

A simple example of a semiring is the Boolean semiring, which has true ($\top$) and false ($\bot$) as the domain and uses the logical OR and AND operations as sum and product, respectively. Table~\ref{tab:semirings} gives an overview of relevant semirings along with the shorthand name we employ for them. For example, we denote the Boolean semiring as $\textsc{Bool}$.

\begin{table*}
    \centering
    \begin{tabular}{c c c c c c l l l}
        Semiring & Domain $\A$  & $\oplus$ & $\otimes$ & $e^\oplus$ & $e^\otimes$ & Labels $\alpha(x)$ & AMC Task & $\nabla$AMC Task \\\hline
        \textsc{Bool} & $\{\top, \bot\}$ & $\lor$ & $\land$ & $\bot$ & $\top$ & $\top$ & SAT & Cond. SAT \\
         &  &  &  &  &  & $\sim p(x)$ & Sampling & IndeCateR \\

        \textsc{Nat} & $\N$ & $+$ & $\times$ & $0$ & $1$ & 1 & \#SAT & Cond. \#SAT \\
        \textsc{Prob} & $\R_{\geq 0}$ & $+$ & $\times$ & $0$ & $1$ & $p(x)$ & WMC & Gradient \\
        \textsc{Log} & $\{{-}\infty\} {\cup} \R_{\leq 0}$ & logaddexp & $+$ & $-\infty$ & $0$ & $\log p(x)$ & Log WMC & Log Gradient \\
        \textsc{Viterbi} & $\R_{\geq 0}$ & $\max$ & $\times$ & $0$ & $1$ & $p(x)$ & MPE & MGE \\
        \textsc{Tropical} & $\R_{\geq 0}$ & $\max$ & $+$ & $-\infty$ & $0$ & $\log p(x)$ & Log MPE & Log MGE \\
        \textsc{Fuzzy} & $[0, 1]$ & $\max$ & $\min$ & $0$ & $1$ & $p(x)$ & Fuzzy & / \\
        \textsc{Grad}$_y$ & $\R_{\geq 0} \times \R$ & Eq.~\ref{eq:grad_plus} & Eq.~\ref{eq:grad_times} & $(0,0)$ & $(0, 1)$ & $(p(x), \frac{\partial p(x)} {\partial y})$ & Gradient & Hessian \\ 
        \textsc{Sens} & $\R[\V]$ & $+$ & $\times$ & $0$ & $1$ & x & Sensitivity & Cond. Sens. \\ 
        \textsc{Obdd} & OBDD($\V$) & $\lor$ & $\land$ & OBDD(0) & OBDD(1) & OBDD($x$) & OBDD & Cond. OBDD
    \end{tabular}
    \caption{Overview of relevant commutative semirings with their corresponding interpretation in AMC and $\nabla \AMC$.}
    \label{tab:semirings}
\end{table*}

\subsection{Propositional Logic}

\paragraph{Syntax} We write $\V$ for a set of propositional variables. The infinite set of logical formulas $\F_\V$ over the variables $\V$ is defined inductively as follows. A formula $\phi \in \F_\V$ is either true $\top$, false $\bot$, a propositional variable $v \in \V$, a negation of a formula $\neg \phi_1$, a conjunction of formulas $\phi_1 \land \phi_2$, or a disjunction of formulas $\phi_1 \lor \phi_2$. \emph{Literals} are variables or negated variables, and the set of all literals is denoted as $\Ell = \V\ \cup\ \{\neg v \mid v\in\V\}$. 

\begin{definition}
Given a formula $\phi$, the \textit{conditioned} formula $\phi {\mid} x$ with $x \in \Ell$ equals the formula $\phi$ where every occurrence of the literal $x$ is replaced with $\top$ and every occurrence of $\neg x$ is replaced with $\bot$. When $x \not\in \Ell$, $\phi {\mid} x$ is defined as $\bot$.
\end{definition}

\paragraph{Semantics} An \textit{interpretation} $I \subset \Ell$ is a set of literals which denotes a truth assignment. This means that for each variable $v \in \V$, either $v \in I$ or $\neg v \in I$. When a formula $\phi$ is satisfied in the interpretation $I$ according to the usual semantics, we say that $I$ is a \textit{model} of $\phi$, written as $I \models \phi$. The set of all models of a formula is denoted $\mathcal{M}(\phi) = \{ I \mid I \models \phi \}$.

From the algebraic view, the propositional formulas also form a semiring $(\F_\V, \lor, \land, \bot, \top)$. As opposed to the \textsc{Bool} semiring, the operations $\lor$ and $\land$ are structural here and create new formulas from their arguments. This is also known as the \emph{free} commutative semiring generated by the literals $\Ell$.

\subsection{Algebraic Model Counting}

The task of \emph{algebraic model counting} (AMC) consists of evaluating the models of a formula in a given semiring \cite{kimmig_algebraic_2017}.

\begin{definition}[Algebraic Model Counting]\label{def:amc}
    Given a semiring $(\A, \oplus, \otimes, e^\oplus, e^\otimes)$ and a labelling function $\alpha: \Ell \to \A$ which maps literals into the semiring domain, the algebraic model count is a mapping from formulas into the semiring domain.
    \[ \text{AMC}(\phi;\alpha) = \bigoplus_{I \in \mathcal{M}(\phi)} \bigotimes_{x \in I} \alpha(x) \]
\end{definition}

AMC generalizes many existing inference tasks. For example, the satisfiability (SAT) task, which asks whether a formula has a model, can be solved by $\AMC$ in the \textsc{Bool} semiring. Model counting (\#SAT), which asks how many models a formula has, is solved using the \textsc{Nat} semiring, and weighted model counting (WMC) using the \textsc{Prob} semiring. WMC is of particular significance, as probabilistic inference in e.g. Bayesian networks can be reduced to WMC \cite{chavira_probabilistic_2008}.

\begin{example}\label{ex:amc}
Consider the formula $\phi = (x \lor y) \land z$ over the set of variables $\V = \{x, y, z\}$. This formula has three models: $\mathcal{M}(\phi) = \{ \{x, y, z\}, \{\neg x, y, z\}, \{x, \neg y, z\} \}$. So for the AMC, we get
\begin{align*}
\AMC(\phi; \alpha) =& \ \left(\alpha(x) \otimes \alpha(y) \otimes \alpha(z)\right) \\ 
&\oplus \left(\alpha(\neg x) \otimes \alpha(y) \otimes \alpha(z)\right) \\
&\oplus \left(\alpha(x) \otimes \alpha(\neg y) \otimes \alpha(z)\right)
\end{align*}

To compute the model count, we evaluate in the $\textsc{Nat}$ semiring with a constant labelling function $\forall x \in \Ell: \alpha(x) = 1$, giving $\AMC_{\textsc{Nat}}(\phi;\alpha) = 3$. Similarly, setting all weights in $\alpha$ to $\top$ in the \textsc{Bool} semiring shows that $\phi$ is satisfiable.
If we assign weights to the literals, e.g.
\begin{align*}
&\alpha(x) = 0.5,\ \alpha(\neg x) = 0.5,\ \alpha(y)=0.1,\\ 
&\alpha(\neg y)=0.9, \ \alpha(z)=0.8, \ \alpha(\neg z) = 0.2
\end{align*}
we get $\AMC_\textsc{Prob}(\phi; \alpha) = 0.44 $ for the WMC.

\end{example}

The algebra of propositional logic (the \textit{Boolean algebra}, not to be confused with the Boolean semiring) observes additional properties on top of the semiring such as idempotency. This difference between the Boolean algebra and the free semiring is precisely what makes AMC hard; two equivalent formulas might not be equivalent under a specific semiring. For example, $\phi \land \phi$ equals $\phi$ in the Boolean algebra but not in the free semiring.

\subsection{Circuits}

By reusing subformulas, circuits are a more compact representation for Boolean formulas \cite{vollmer_introduction_1999}.

\begin{definition}[Boolean Circuit]
    A Boolean circuit is a directed acyclic graph representing a propositional formula. This means that every leaf node contains a literal, and all other nodes are either $\lor$-nodes or $\land$-nodes.
\end{definition}

\emph{Algebraic circuits} generalize Boolean circuits to semiring operations \cite{derkinderen_algebraic_2020}. Algebraic circuits in the $\textsc{Prob}$ semiring are better known as \emph{arithmetic circuits}.

\begin{example}\label{ex:circuit}
The formula $\phi = (x \lor y) \land z$ can be represented by the algebraic circuit below.
\begin{center}
\begin{tikzpicture}
\tikzstyle{mystyle}=[circle,minimum size=4mm,draw=black,fill=white]
    \node[shape=circle] (x) at (0,0) {$x$};
    \node[shape=circle, inner sep=0pt] (c) at (1,0) {$\bigotimes$};
    \node[shape=circle] (z) at (1.5,1) {$z$};
    \node[shape=circle, inner sep=0pt] (a) at (0.5,1) {$\bigoplus$};
    \node[shape=circle, inner sep=0pt] (b) at (1,2) {$\bigotimes$};
    \node[shape=circle] (nx) at (0.5,-1) {$\neg x$};  
    \node[shape=circle] (y) at (1.5,-1) {$y$};  
    \path [->] (x) edge (a);
    \path [->] (c) edge (a);
    \path [->] (z) edge (b);
    \path [->] (a) edge (b);
    \path [->] (nx) edge (c);
    \path [->] (y) edge (c);
\end{tikzpicture}
\end{center}
\end{example}

The tractability of queries on a circuit are related to the structural properties of that circuit \cite{darwiche_knowledge_2002, vergari_compositional_2021}. For example, \emph{determinism} is required to compute the model count in polynomial time. A circuit is deterministic if all the children of an $\lor$-node are mutually exclusive, meaning they do not share any model. The circuit in Example~\ref{ex:circuit} is deterministic.

Transforming circuits to achieve structural properties such as determinism is achieved with \emph{knowledge compilation} \cite{darwiche_knowledge_2002}. \citet{kimmig_algebraic_2017} proved that the properties of the semiring determine are linked to structural properties of the algebraic circuit. For example, determinism is needed to evaluate in a semiring that is not additively idempotent (of which model counting in the \textsc{Nat} semiring is an example). From an algebraic viewpoint, knowledge compilation tries to generate the smallest circuit representing the models $\mathcal{M}(\phi)$ within a specific semiring.

\section{Conditionals as Gradients}\label{sec:amc_grad}

Due to the inclusion of neural networks, neurosymbolic methods are typically trained using gradient descent. As probabilistic inference is differentiable, these neurosymbolic models are end-to-end trainable with off-the-shelf optimizers such as Adam \cite{kingma_adam_2017}. Some examples of this approach include the semantic loss \cite{xu_semantic_2018} and DeepProbLog \cite{manhaeve_deepproblog_2018}. Other statistical-relational techniques are frequently trained by expectation-maximization (EM), which is also closely linked to gradients \cite{xu_convergence_1996}.
For these reasons, we focus on the computation of gradients. 

It is known that the gradient of probabilistic inference is the same as conditional inference \cite{darwiche_differential_2003}. In AMC with the $\textsc{Prob}$ semiring, better known as weighted model counting, we have that
\[ \frac {\partial \AMC_{\textsc{Prob}}(\phi; \alpha)} {\partial \alpha(x)} = \AMC_{\textsc{Prob}}(\phi {\mid} x; \alpha) \]

We hence propose to generalize the notion of gradients to algebraic model counting as follows. 
\begin{definition} The AMC gradient is defined as the vector of AMC conditionals to each literal $x_i \in \Ell$.
\[ \nabla \AMC(\phi;\alpha) = [ \AMC(\phi {\mid} x_1;\alpha), \dots, \AMC(\phi {\mid} x_n;\alpha)]^\top \]
\end{definition}
Observe that $\nabla \AMC$ is well-defined in any semiring, even when the semiring domain is discrete or otherwise non-differentiable such as in the \textsc{Bool} or \textsc{Nat} semirings.

\paragraph{Literal vs Variable Gradients}
Our definition of $\nabla \AMC$ is the gradient towards a literal and not a variable. However, when maximizing the labels of the positive and negative literal separately, the global optimum is trivial. In practice, the positive and negative labels of a variable $v$ are often linked, e.g. as $\alpha(v) \oplus \alpha(\neg v) = e^\otimes$. In this case, it follows that the AMC derivative to a variable $v$ is
\[ \AMC(\phi {\mid} v; \alpha) \oplus -\AMC(\phi {\mid} \neg v; \alpha) \]

We will not further make this distinction, as the above can be computed straightforwardly from the AMC gradient from Definition~\ref{def:amc} and is only well-defined on rings.

\begin{example}\label{ex:amc_grad}
Consider again $\phi = (x \lor y) \land z$, the formula of Example~\ref{ex:amc}. Then $\nabla \AMC(\phi; \alpha)$ is
\begin{align*}
[ &\AMC(\phi {\mid}x;\alpha), \dots, \AMC(\phi {\mid}\neg z; \alpha) ]^\top \\
= [ &\alpha(y) {\otimes} \alpha(z) \oplus \alpha(\neg y) {\otimes} \alpha(z),\\
&\alpha(x) {\otimes} \alpha(z) \oplus \alpha(\neg x) {\otimes} \alpha(z),\\
&\alpha(x) {\otimes} \alpha(y) \oplus \alpha(\neg x) {\otimes} \alpha(y) \oplus \alpha(x) {\otimes} \alpha(\neg y),\\
&\alpha(y) {\otimes} \alpha(z),\\
&\alpha(x) {\otimes} \alpha(z),\\
&\alpha(e^\oplus)
]
\end{align*}

\end{example}

\subsection{Conditionals as Semiring Derivations}

We further motivate the definition of $\nabla \AMC$ by its relation to differentiable algebra \cite{kolchin_differential_1973}. Differentiable algebra studies generalizations of derivatives, through functions which observe the same linearity and product rule as conventional derivatives.

\begin{definition}[Semiring Derivation]
A derivation $\delta$ on a semiring $(\A, \oplus, \otimes, e^\oplus, e^\otimes)$ is a map $\delta: \A \to \A$ on itself which satisfies the following properties for all $a$ and $b$ in $\A$.
\begin{itemize}
    \item \emph{Linearity}: $\delta (a \oplus b) = \delta(a) \oplus \delta(b) $
    \item \textit{Product rule}: $\delta (a \otimes b) = \left( a \otimes \delta (b)\right) \oplus \left(b \otimes \delta (a)\right)$
\end{itemize}
\end{definition}

Many properties that hold for conventional derivation are retained for semiring derivations. For example, $\delta(e^\oplus) = \delta(e^\otimes) = e^\oplus$ in any derivation. We refer to e.g. \citet{dimitrov_derivations_2017} for a summary of existing results on semiring derivations.
Interestingly, semiring derivations themselves induce an algebraic structure, which gets generated by $\nabla \AMC$.

\begin{theorem}\label{thm:derivation_module}
Every derivation $\delta$ is a linear combination of the elements in {\normalfont $\nabla \AMC$}. More formally, {\normalfont $\nabla \AMC$} is a basis of the $\F_\V$-semimodule over $\mathcal{D}(\F_\V)$.
\end{theorem}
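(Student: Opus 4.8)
The statement is the semiring incarnation of a classical fact from differential algebra: the derivations of a polynomial ring $k[x_1,\dots,x_n]$ form a free module with the partial derivatives $\partial/\partial x_i$ as basis, and every derivation decomposes as $\delta = \sum_i \delta(x_i)\,\partial/\partial x_i$. My plan is to transport this argument to the free commutative semiring $\F_\V$ (which is $\N[\Ell]$, polynomials in the literals), with the role of $\partial/\partial x_i$ played by the $i$-th component of $\nabla\AMC$. The first step is therefore to identify, for each literal $x_i\in\Ell$, the operator $\partial_{x_i}\colon \phi\mapsto \AMC(\phi{\mid}x_i)$ with the formal partial derivative on $\F_\V$ and to check it really is a semiring derivation. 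Linearity is immediate since conditioning distributes over $\oplus$; the product rule is the content to verify, and it holds because on the sum-of-products produced by $\AMC$ every monomial is a full assignment, so it contains exactly one of $x_i,\neg x_i$, and conditioning then acts exactly as $x_i$-differentiation (strip the monomials carrying $x_i$, annihilate those carrying $\neg x_i$). Establishing this identification cleanly is where I expect the real friction, since plain substitution by itself is \emph{not} a derivation — the coincidence depends on working on the image of $\AMC$.

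Next I would prove the key lemma that a derivation is \emph{uniquely determined by its values on the generating literals}. This is a structural induction on $\F_\V$: every element is built from literals (and the constants $e^\oplus,e^\otimes$) by $\oplus$ and $\otimes$; linearity fixes $\delta$ on $\oplus$-combinations and the product rule fixes it on $\otimes$-combinations, while $\delta(e^\oplus)=\delta(e^\otimes)=e^\oplus$ is already recorded in the excerpt. Hence once the values $\delta(x_1),\dots,\delta(x_n)$ are known, $\delta(\phi)$ is forced for every $\phi$. This lemma is what makes the finite list $\nabla\AMC$ enough to pin down an arbitrary derivation.

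With the lemma in hand, spanning and independence are short. For spanning, form $\tilde\delta := \bigoplus_i \delta(x_i)\otimes\partial_{x_i}$; since an $\F_\V$-combination of derivations is again a derivation (the same check also shows $\mathcal{D}(\F_\V)$ is an $\F_\V$-semimodule), $\tilde\delta$ is a derivation, and evaluating at a generator gives $\tilde\delta(x_j)=\delta(x_j)$ using $\partial_{x_i}(x_j)=e^\otimes$ for $i=j$ and $e^\oplus$ otherwise together with neutrality of $e^\oplus$ and absorption. By the lemma $\tilde\delta=\delta$, so $\nabla\AMC$ spans. For independence and uniqueness of coefficients, apply any combination $\bigoplus_i f_i\otimes\partial_{x_i}$ to $x_j$ to read off $f_j$, so the coefficients $f_j=\delta(x_j)$ are unique; thus $\nabla\AMC$ is a basis.

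Finally, a word on the semiring setting. A naive transcription of the module proof occasionally uses additive inverses, so I would be careful to route everything through neutrality of $e^\oplus$, absorption $e^\oplus\otimes a = e^\oplus$, and evaluation at generators, never through subtraction — this keeps the whole argument inside the $\F_\V$-semimodule $\mathcal{D}(\F_\V)$ and avoids assuming any ring structure. Aside from the identification of the conditional with $\partial_{x_i}$ flagged above, every remaining step is a routine check.
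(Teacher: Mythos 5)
Your proposal is correct and takes essentially the same route as the paper's proof: the paper likewise identifies the components of $\nabla\AMC$ with the literal derivations $\delta_l$ (determined by $\delta_l(l)=e^\otimes$ and $\delta_l(l')=e^\oplus$ for $l'\neq l$), proves spanning via the decomposition $\delta = \bigoplus_{l\in\Ell}\delta(l)\otimes\delta_l$ verified on literals and extended by an induction argument (your key lemma on determination by generators), and proves independence by evaluating linear combinations at literals. If anything, you are more explicit than the paper on the one delicate step — that conditioning agrees with formal differentiation only on the multilinear sum-of-models image of $\AMC$ — which the paper's proof asserts without comment.
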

\begin{proof}
    In appendix.
\end{proof}

Here, we denote the set of all possible derivations on $\A$ as $\mathcal{D}(\A)$.
Theorem~\ref{thm:derivation_module} says that every semiring derivation in $\mathcal{D}(\F_\V)$ can be seen as computing a dot product with $\nabla \AMC$. So in this sense, calculating $\nabla \AMC$ is a sufficient notion for algebraic derivation.

\section{Computing $\nabla \AMC$}\label{sec:compute_grad}

Conditioning a formula is straightforward, and hence computing $\nabla \AMC$ can be done with any off-the-shelf AMC solver. In other words, all the results of \citet{kimmig_algebraic_2017} that link circuit and semiring properties transfer directly to $\nabla \AMC$. Naively computing each element in $\nabla \AMC$ closely equals forward mode differentiation and was already proposed for WMC by \citet{sang_performing_2005} to compute the conditionals of Bayesian inference. 

Forward mode differentiation is well-known for scaling linearly in the number of input variables. Reverse mode differentiation, better known as \textit{backpropagation}, is a dynamic programming algorithm that scales linearly in the number of output variables, which is usually constant. 
The backpropagation algorithm can easily be extended to work over semirings \cite{du_generalizing_2023}, and can hence compute $\nabla \AMC$.

As a dynamic programming algorithm, the downside of backpropagation is its memory use. More precisely, the naive backpropagation algorithm on a circuit has linear memory complexity in the number of edges the circuit. \citet{shih_smoothing_2019} already demonstrated that when the semiring is a semifield, i.e. there is a division operation, this memory complexity reduces to linear in the number of nodes of the circuit. Given that the number of edges in a circuit can be up to the square of the number of nodes, this forms a substantial improvement. We show that this semifield requirement can be dropped while retaining the same memory complexity.

\begin{theorem}\label{thm:cancel}
    The backward pass on an algebraic circuit $C$ has $O(e)$ time and $O(n)$ memory complexity, where $e$ and $n$ are the number of edges and nodes in $C$ respectively.
\end{theorem}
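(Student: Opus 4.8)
The plan is to exhibit an explicit two-pass dynamic program over the DAG $C$ and then account separately for time and memory. Write $\mathrm{ch}(v)$ and $\mathrm{pa}(v)$ for the children and parents of a node $v$, let $r$ be the root, and let $f(v)$ denote the value computed in the forward pass, i.e. the AMC value of the subcircuit rooted at $v$. The forward pass evaluates $f(v) = \bigoplus_{c \in \mathrm{ch}(v)} f(c)$ at $\oplus$-nodes and $f(v) = \bigotimes_{c \in \mathrm{ch}(v)} f(c)$ at $\otimes$-nodes in topological order; it touches each edge once and stores one value per node, so it already runs in $O(e)$ time and $O(n)$ memory. The backward pass then computes an adjoint $\delta(v)$ for every node, defined by $\delta(r) = e^\otimes$ and, in reverse topological order, $\delta(v) = \bigoplus_{p \in \mathrm{pa}(v)} \mathrm{msg}(p \to v)$. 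Reading off the linearity and product rules of a semiring derivation node-locally, the message from an $\oplus$-parent is $\mathrm{msg}(p \to v) = \delta(p)$, while the message from a $\otimes$-parent is the leave-one-out product $\mathrm{msg}(p \to v) = \delta(p) \otimes \bigotimes_{c \in \mathrm{ch}(p),\, c \neq v} f(c)$. The leaf adjoints are exactly the entries of $\nabla\AMC$, so this computes the intended quantity.

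The only nontrivial cost sits at the $\otimes$-nodes. Computing $\bigotimes_{c \neq v} f(c)$ anew for each child $v$ costs $O(k^2)$ time for a node with $k = |\mathrm{ch}(p)|$ children, and materializing one message per edge would cost $O(e)$ memory; this is precisely where \citet{shih_smoothing_2019} invoke division, forming the full product once and dividing out $f(v)$ per child, which is why a semifield is needed. I would instead remove the division with prefix and suffix products: fixing an arbitrary order $c_1, \dots, c_k$ of the children, compute the prefixes $P_i = f(c_1) \otimes \cdots \otimes f(c_i)$ and suffixes $S_i = f(c_i) \otimes \cdots \otimes f(c_k)$ in two linear scans, and set the leave-one-out product for $c_j$ to $P_{j-1} \otimes S_{j+1}$, with the conventions $P_0 = S_{k+1} = e^\otimes$. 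This uses only associativity of $\otimes$, which holds in every semiring, so the semifield hypothesis is dropped.

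For the accounting, on time the backward pass spends $O(1)$ per incident edge at $\oplus$-nodes and, via the prefix/suffix scans, $O(k)$ at each $\otimes$-node with $k$ children; since $\sum_v |\mathrm{ch}(v)| = e$, the backward pass is $O(e)$, matching the forward pass. On memory, we permanently store only $f(v)$ and $\delta(v)$, which is $O(n)$, and we accumulate each $\delta(v)$ in place as its parents emit messages. The arrays $P$ and $S$ are transient: they are allocated for a single $\otimes$-node, consumed to push its $k$ messages into the children's accumulators, and freed before the next node is processed, so a single reused buffer makes the peak transient cost $O(\max_v |\mathrm{ch}(v)|) \le O(n)$. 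Hence the whole procedure runs in $O(n)$ memory. The main obstacle is exactly this product-node case, where one must simultaneously avoid the $O(k^2)$ all-pairs blowup in time and the per-edge $O(e)$ blowup in memory without the division that powered the semifield argument; the prefix/suffix identity together with buffer reuse and in-place accumulation is what resolves that tension.
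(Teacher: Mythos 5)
Your proposal is correct and takes essentially the same route as the paper: Theorem~\ref{thm:cancel} is realised there by Algorithm~\ref{alg:backprop}, whose product-node case is exactly your prefix/suffix cumulative-product trick (a forward array of partial products $r$ plus a running suffix product $t$ in a reverse scan), with adjoints accumulated in place and the per-node buffer discarded after use. Both arguments thus obtain $O(e)$ time and $O(n)$ memory using only associativity of $\otimes$, dropping the semifield/division assumption of prior work.
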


Theorem~\ref{thm:cancel} is realised by Algorithm~\ref{alg:backprop}. This algorithm assumes that the forward pass already computed the values of sum nodes as $\alpha(n) = \bigoplus_{c \in \text{children}(n)} \alpha(c)$ and product nodes as  $\alpha(n) = \bigotimes_{c \in \text{children}(n)} \alpha(c)$. Algorithm~\ref{alg:backprop} then performs backpropagation in the usual way, going over the circuit from the root to the leaves and calculating the gradients of the children of each node using the chain rule.  
Concretely, the derivative towards a node $n$ is
\begin{equation}
\gamma(n) = \bigoplus_{p \in \text{parents}(n)} \gamma(p)
\end{equation}
when $n$ has sum nodes as parents (lines 5-7), or
\begin{equation}\label{eq:back_times}
\gamma(n) = \bigoplus_{p \in \text{parents}(n)} \bigotimes_{c \in \text{children}(p) \setminus \{n \}} \alpha(c)
\end{equation}
when $n$ has product nodes as parents (lines 9-19).

Algorithm~\ref{alg:backprop} relies on some dynamic programming to avoid recomputing the inner product in Equation~\ref{eq:back_times} for every parent. For example, taking the gradient of $c_1 \otimes c_2 \otimes c_3$ requires us to compute $[c_2 \otimes c_3, c_1 \otimes c_3, c_1 \otimes c_2]$. Naively, computing these individually will result in quadratic time complexity. We avoid this using two cumulative products: the forward $[e^\otimes, c_1, c_1\otimes c_2]$ and the backward $[c_3 \otimes c_2, c_2, e^\otimes]$. Both of these cumulative products can be computed in linear time, and their element-wise product results in the desired gradient.

Further optimizations on Algorithm~\ref{alg:backprop} are possible depending on the semiring structure to avoid the need to store the cumulative products in the backpropagation of the product nodes (lines 9-14). A first property that can achieve this is cancellation.

\begin{definition}
A semiring element $a \in \A$ is (multiplicatively) \emph{cancellative} if, for each $b$ and $c$ in $A$, $a \otimes b = a \otimes c$ implies $b = c$.
\end{definition}

Cancellation can be seen as a generalization of inverses. So when $c = a \otimes b$ and $a$ is cancellative we can write $b = c \oslash a$. Indeed, the inverse $c \oslash a$ must exist as $c = a \otimes b$, and the cancellation property furthermore assures that it is unique.
The semiring of the natural numbers $\textsc{Nat}$ form an example where we can use cancellation even though it is not a ring and there is no inverse in general.

If a product node $n$ is multiplicatively cancellative, we can simply compute the product gradient as
\[ \gamma(n) = \bigoplus_{p \in \text{parents}(n)} \alpha(p) \oslash \alpha(n) \]

A second property we can exploit is ordering.
\begin{definition}
We call $a,b\in \A$ (multiplicatively) \textit{ordered} when $a \otimes b =a$ or $a \otimes b = b$.
\end{definition}
Observe that $a \leq b \Leftrightarrow a \otimes b = a$ forms a partial order in semirings with idempotent multiplication.
So when the children of a node $n$ are all ordered,
the product derivatives are simply $\gamma[n] \otimes \alpha[n]$ for all children except the largest child. For the largest child, the derivative is $\gamma[n]$ times the one-but-largest child. 

The $\textsc{Fuzzy}$ semiring is an example of a semiring where all elements are multiplicatively ordered.
Note that barring the identity, ordering and cancellation are mutually exclusive, meaning they can be used complementarily. For example, in the $\textsc{Prob}$ and $\textsc{Nat}$ semirings, all elements are cancellative except zero which is ordered. So we can always exploit either cancellation or ordering, depending on the value of the node. We present a variation of Algorithm~\ref{alg:backprop} in the appendix which exploits both these cancellation and ordering properties.

\begin{algorithm}[tb]
\caption{Algebraic Backpropagation}
\label{alg:backprop}
\textbf{Input}: A circuit $C$ over the literals $\Ell$, a labelling of the nodes $\alpha$ (computed in the forward pass), and a semiring $(\A, \oplus, \otimes, e^\oplus, e^\otimes)$.\\
\textbf{Output}: The algebraic gradient $\nabla \AMC(C; \alpha)$.

\begin{algorithmic}[1] %
\STATE $\gamma \leftarrow [e^\oplus, \dots, e^\oplus]$, a vector with $e^\oplus$ for each node in $C$
\STATE $\gamma[\text{root of }C] \leftarrow e^\otimes$
\FOR{nodes $n$ in the circuit $C$, parents before children}
\IF {$n$ is a sum node}
\FOR{$c$ in children($n$)}
\STATE {$\gamma[c] \leftarrow \gamma[c] \oplus \gamma[n]$}
\ENDFOR
\ELSIF{$n$ is a product node}
\STATE {$r \leftarrow [e^\otimes,\dots,e^\otimes]$, with length $\lvert \text{children}(n)\rvert$}
\STATE {$t \leftarrow e^\otimes$}
\FOR{$c$ in children($n$)}
\STATE {$r[c] \leftarrow t$}
\STATE {$t \leftarrow t \otimes \alpha[c]$}
\ENDFOR
\STATE {$t \leftarrow e^\otimes$}
\FOR{$c$ in children($n$) in reverse order}
\STATE {$\gamma[c] \leftarrow \gamma[c] \ \oplus\ \gamma[n] \otimes t \otimes r[c]$}
\STATE {$t \leftarrow t \otimes \alpha[c]$}
\ENDFOR
\ENDIF
\ENDFOR
\STATE \textbf{return} $\gamma$
\end{algorithmic}
\end{algorithm}

\section{Semirings for $\nabla \AMC$}\label{sec:semirings}

We now explore the use of $\nabla \AMC$ for first-order optimization and related applications. In the probabilistic setting, we assume that our formula $\phi$ has weights $\alpha(x) \in [0, 1]$ with $\alpha(\neg x) = 1 - \alpha(x)$. In other words, the variables correspond to Bernoulli distributions, and we have a probability distribution over interpretations $p(I;\alpha) = \prod_{l \in I} \alpha(l)$. The weighted model count can then also be seen as a probability, which we denote $p(\phi; \alpha) = \AMC_\textsc{prob}(\phi; \alpha)$.

\paragraph{\textsc{Prob} semiring} By construction, the algebraic gradient $\nabla \AMC$ in the \textsc{Prob} semiring is just the usual gradient. 
\[\nabla \AMC_{\textsc{Prob}} (\phi; \alpha) = \nabla_{\!\alpha\,} p(\phi; \alpha)\]

\paragraph{\textsc{Log} semiring} Next, if we take the \textsc{Log} semiring, we instead get the logarithm of the gradient. Note that this is different from $\nabla_{\! \alpha\,} \log p(\phi; \alpha)$, the gradient of the log probability.
\[\nabla \AMC_{\textsc{Log}} (\phi; \alpha) = \log \nabla_{\! \alpha\,} p(\phi; \alpha)\] %

Backpropagation with the \textsc{Log} semiring provides a more numerically stable way to compute gradients. It can also be applied for Expectation-Maximization (EM), as the expectation step on logic formulas reduces to computing the conditionals $p(x \mid \phi)$ \cite{peharz_einsum_2020}.
\[ p(x \mid \phi) = \exp\left( \nabla \AMC_{\textsc{Log}}(\phi; \alpha) + \alpha - \AMC_{\textsc{Log}}(\phi; \alpha) \right)\]
The above also closely relates to the PC flow as defined by \citet{choi_group_2021}.

\paragraph{\textsc{Viterbi} semiring} In the $\textsc{Viterbi}$ semiring, the AMC gradient computes the maximum gradient over all models. Here, $\max$ computes the element-wise maximum of the gradient vectors of each model of $\phi$.
\[ \nabla \AMC_{\textsc{Viterbi}} (\phi; \alpha) = \max_{I \in \mathcal{M}(\phi)} \nabla_{\!\alpha\,} p(I; \alpha) \]
Use cases for this include greedily approximating the true gradient or gradient-based interpretability. Similarly, the \textbf{\textsc{Tropical} semiring} gives the log-space equivalent of the \textsc{Viterbi} semiring.
\[ \nabla \AMC_{\textsc{Tropical}} (\phi; \alpha) = \log \max_{I \in \mathcal{M}(\phi)} \nabla_{\!\alpha\,} p(I; \alpha) \]

\paragraph{\textsc{Grad} semiring} AMC with the \textsc{Grad} semiring computes the Shannon entropy \cite{li_first-_2009}. 
\[H(\phi) = -\sum_{I \in \mathcal{M}(\phi)} p(I; \alpha) \log p(I; \alpha)\] 
Taking the gradient of the AMC in \textsc{Grad} hence results in the conditional Shannon entropy towards each literal.
\[ \nabla \AMC_{\textsc{Grad}}(\phi) = \left[H(\phi \mid x_1), \dotsm H(\phi \mid x_n)\right]^\top \]

This conditional entropy is used as the information gain for learning, or for interpretability or regularization purposes.
Several other statistical quantities such as the KL divergence can be framed as the difference between the entropy and conditional entropy, and can hence easily be computed from the $\textsc{Grad}$ semiring.

\paragraph{\textsc{Bool} semiring with sampled labels} By sampling from the labels $\alpha$, we can combine the Boolean semiring \textsc{Bool} with a stochastic labelling. This implements a naive Monte Carlo approximation, which equals the true probability in expectation. 
\[ \E_{w\sim \alpha}[\AMC_\textsc{Bool}(\phi; w)] = p(\phi; \alpha) \]

Interestingly, we can immediately take the algebraic gradient here to get an unbiased gradient estimator. 
\[ \E_{w \sim \alpha}[ \nabla \AMC_{\textsc{Bool}}(\phi; w) ] = \nabla_{\!\alpha\,} p(\phi; \alpha) \]
Upon closer inspection, this equals IndeCateR \cite{de_smet_differentiable_2023}, a state-of-the-art unbiased estimator for gradients of discrete distributions that Roa-Blackwellizes REINFORCE. Moreover, by using backpropagation we estimate the gradient in linear time in the size of the formula, as opposed to the quadratic time required in the original formulation by \citet{de_smet_differentiable_2023}.

\paragraph{Other semirings} Some remaining semirings have less clear gradient-based interpretations, but might still be of note. The gradient in the \textbf{\textsc{Sens} semiring} computes sensitivity polynomials for the conditioned formulas. The gradient of the \textbf{\textsc{Fuzzy} semiring} computes the highest minimal membership of a literal in each conditioned formulas. 
Ordered binary decision diagrams (OBDD) are a canonical circuit representation supporting conjunction and disjunction in polytime \cite{bryant_binary_2018}, and can hence also be employed as a semiring.
Taking the gradient in the \textbf{\textsc{OBDD} semiring} creates a multi-rooted circuit for the gradient using bottom-compilation (i.e. an OBDD circuit which explicitly computes the forward and backward pass).

\section{Second-Order Derivations}\label{sec:second_order}

So far, we only considered first-order methods such as gradient descent. Second-order methods such as Newton's method take curvature into account by preconditioning the gradient using the inverse Hessian. For a function $f$ parameterized by $\theta$, Newton's method updates the parameters as
\[ \theta \leftarrow \theta + [\nabla^2 f(\theta)]^{-1} \nabla f(\theta) \]

Computing the full Hessian matrix has quadratic complexity, making the cost of second-order methods for machine learning often prohibitive. However, tractable circuits support a wide range of inference operations in polytime that are otherwise NP-hard \cite{vergari_compositional_2021}. The question hence poses itself whether tractable circuits could improve upon this quadratic complexity. %

Similar to the gradient, we first generalize the Hessian matrix to AMC as follows.
\[\nabla^2 \AMC(\phi) = \begin{bmatrix}
\AMC(\phi {\mid} x_1, x_1) & {\dots} & \AMC(\phi {\mid} x_1, x_n) \\
\vdots &  & \vdots \\
\AMC(\phi {\mid} x_n, x_1) & {\dots} & \AMC(\phi {\mid} x_n, x_n) \\
\end{bmatrix}\]

We ignore the labels $\alpha$ here to simplify notation. By construction, $\nabla^2\AMC$ is the conventional Hessian in the $\textsc{Prob}$ semiring.
Note that although the algebraic Hessian is well-defined in any semiring, applying Newton's method requires that the Hessian can be inverted.

By using $\nabla \AMC$ in the $\textsc{Grad}$ semiring, we get a straightforward way to calculate $\nabla^2 \AMC$ for the $\textsc{Prob}$ semiring. Indeed, the $\textsc{Grad}$ semiring calculates partial derivatives using dual numbers, so in algebraic backpropagation this gives a row of the Hessian.
\[\left[ \frac{\partial^2 \AMC_\textsc{Prob}(\phi)}{\partial \alpha(x) \partial \alpha(y_1)},  \dots, \frac{\partial^2 \AMC_\textsc{Prob}(\phi)}{\partial \alpha(x) \partial \alpha(y_n)} \right] \]

Next, we prove that $\nabla^2 \AMC$ on tractable circuits still lacks structure, meaning that the expensive quadratic memory cost cannot avoided. 

\begin{theorem}\label{thm:hessian_quadratic}
    Representing {\normalfont $\nabla^2 \AMC(C)$} of a circuit $C$ over $v$ variables has a $O(v^2)$ memory complexity, even when $C$ is smooth, decomposable, and deterministic.
\end{theorem}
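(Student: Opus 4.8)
The plan is to prove the statement as a worst-case lower bound, by an encoding (counting) argument: I will exhibit, for every matrix $M \in \{0,1\}^{k \times k}$, a polynomial-size circuit $C_M$ over $v = 2k$ variables that is smooth, decomposable, and deterministic, and whose algebraic Hessian $\nabla^2\AMC(C_M)$, evaluated in the model-counting semiring $\textsc{Nat}$, contains $M$ as a submatrix. Since distinct $M$ then yield distinct Hessians, and there are $2^{k^2}$ choices of $M$, any representation scheme able to store all these Hessians must, by pigeonhole, spend at least $k^2 = \Omega(v^2)$ bits on some instance. Thus the structural properties of $C$ cannot reduce the worst-case memory below quadratic, which is what the theorem asserts.

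The construction I would use is a disjoint DNF over variables $x_1,\dots,x_k$ and $y_1,\dots,y_k$:
\[ \phi_M = \bigvee_{(i,j):\,M_{ij}=1} \Big( x_i \wedge \bigwedge_{i'\ne i}\neg x_{i'} \wedge y_j \wedge \bigwedge_{j'\ne j}\neg y_{j'} \Big). \]
Each disjunct is a complete term that mentions every variable exactly once (so the circuit is smooth and decomposable), and any two distinct disjuncts disagree on the polarity of some $x_{i}$ or $y_{j}$ and are therefore mutually exclusive (so the circuit is deterministic). The size is $O(k^3)$, polynomial in $v$. To read off $M$, I would verify the conditioning directly: $\phi_M \mid x_i$ kills every disjunct containing $\neg x_i$ and retains exactly those indexed by row $i$; conditioning the result on $y_j$ then retains a single residual term iff $M_{ij}=1$, whose remaining literals are all forced negative and hence admit one model. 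Consequently $\AMC_{\textsc{Nat}}(\phi_M \mid x_i, y_j) = M_{ij}$, so the submatrix of $\nabla^2\AMC(C_M)$ with rows indexed by the positive literals $x_i$ and columns by the positive literals $y_j$ equals $M$ exactly. A lower bound on the memory needed for a submatrix transfers immediately to the full Hessian.

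The main obstacle, and the place where the strength of the theorem (``even when smooth, decomposable, and deterministic'') really lives, is arranging that all three structural properties hold \emph{simultaneously} for a circuit that can still encode an arbitrary $M$ in polynomial size. The delicate point in the verification is determinism: I must argue that conditioning on the pair $(x_i, y_j)$ isolates exactly the one intended model, so that no spurious additional models inflate the count and corrupt the encoded value of $M_{ij}$ — this is precisely what the exactly-one structure together with mutual exclusivity of the terms guarantees. The remaining steps (the $O(k^3)$ size bound, the transfer from submatrix to full matrix, and the pigeonhole counting step over the $2^{k^2}$ matrices) are routine once the construction is in place.
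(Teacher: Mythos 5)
Your proposal is correct and takes essentially the same approach as the paper: the paper likewise encodes an arbitrary binary matrix as a disjoint DNF of complete terms (the MODS representation), verifies smoothness, decomposability and determinism, reads the matrix back off the Hessian entries via the unique model per entry, and concludes that sub-quadratic storage would amount to lossless compression of arbitrary bit strings. The differences are cosmetic for this theorem — the paper uses a single variable set over the semiring $\mathbb{F}_2$ rather than your bipartite $x$/$y$ split over $\textsc{Nat}$, and it further compresses the circuit to $\Theta(v^2)$ size by sharing cumulative conjunctions of negative literals, an optimization that matters only for the later linear-size impossibility result, not for this memory lower bound.
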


\begin{proofsketch}
    Take for example the binary Galois field as the semiring. Given a sequence of $v^2$ bits, we can now construct a formula $\phi$ such that the (flattened) Hessian equals this bit sequence. This means that, in general, the Hessian has no structure and a sub-quadratic memory complexity would imply lossless compression.
\end{proofsketch}

One might give two counter-arguments to the above theorem. First, gradient descent takes linear memory in the circuit size, but this circuit size might be up to exponential in the number of variables. Indeed, Theorem~\ref{thm:hessian_quadratic} does not rule out that the Hessian can be stored in linear memory complexity in the size of the circuit. Second, calculating the Hessian is typically not the true goal. The above result does not rule out the tractability of a matrix-free approach, where the preconditioned gradient $[\nabla^{2}\AMC(\phi)]^{-1} \nabla \AMC(\phi)$ is computed directly without constructing the Hessian explicitly. 
However, even when considering these arguments, the existence of an algorithm for Newton's method which runs in linear time complexity in the circuit size remains unlikely.

\begin{theorem}\label{thm:hessian_linear_impossible}
Given a circuit $C$ with $n$ nodes, there cannot exist a circuit $C'$ with size $O(n)$ that computes the preconditioned gradient $[\nabla^{2}\AMC(C)]^{-1} \nabla \AMC(C)$, even when $C$ is deterministic, decomposable and smooth.%
\end{theorem}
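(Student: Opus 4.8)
The plan is to argue by contradiction, reusing the incompressibility construction behind Theorem~\ref{thm:hessian_quadratic}. I would work in the binary Galois field $\mathrm{GF}(2)$, where an invertible Hessian is just a non-singular $\{0,1\}$-matrix and the preconditioned gradient $[\nabla^2\AMC(C)]^{-1}\nabla\AMC(C)$ is the unique solution $s$ of the linear system $\nabla^2\AMC(C)\,s = \nabla\AMC(C)$. Assume toward contradiction that for such a circuit $C$ with $n$ nodes there is a circuit $C'$ of size $O(n)$ computing $s$. Since by Theorem~\ref{thm:cancel} the gradient $\nabla\AMC(C)$ is itself computable in size $O(n)$, both sides of the system are available in linear size, and the goal is to show that a single size-$O(n)$ Newton circuit already pins down the whole inverse Hessian $H^{-1}$, with $H := \nabla^2\AMC(C)$, which is what the quadratic lower bound should forbid.

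The key step is to make the right-hand side of the system \emph{selectable}. Exploiting the freedom in the Theorem~\ref{thm:hessian_quadratic} construction, I would augment $C$ with a small block of $O(\log v)$ fresh control literals, arranged so that conditioning on the $j$-th control setting leaves $H$ unchanged while forcing the gradient to equal the unit vector $e_j$. Conditioning never increases circuit size, so the single circuit $C'$, conditioned on each of the $v$ settings, emits $H^{-1}e_1,\dots,H^{-1}e_v$, i.e.\ every column of $H^{-1}$. Thus an $O(n)$-size Newton circuit would compute the entire linear map $b \mapsto H^{-1}b$, giving an $O(n)$-size representation of $H^{-1}$. Choosing $C$ so that $H^{-1}$ is a linear map of high circuit complexity then yields the contradiction.

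The main obstacle is the bookkeeping between circuit size and information content: the contradiction only bites when $H^{-1}$ is a map of circuit complexity $\omega(n)$ while $C$ itself (with $H=\nabla^2\AMC(C)$) stays of size $n$. This forces me to \emph{separate the complexity of $H$ from that of $H^{-1}$}, instantiating $H$ as a structured, small-circuit matrix whose inverse is nevertheless hard; such matrices exist by a counting argument, since there are $2^{v^2}$ linear maps over $\mathrm{GF}(2)$ but only $2^{O(n\log n)}$ maps realizable in size $O(n)$. I expect the real work to lie in exhibiting one admissible tractable $C$ whose preconditioned-gradient map provably escapes the size-$O(n)$ class, and in verifying that the control gadget enforces $e_j$ under conditioning \emph{without} re-encoding $H$ (which would inflate $n$ back to $\Theta(v^2)$ and void the argument) while preserving smoothness, decomposability, and determinism throughout.
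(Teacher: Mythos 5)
Your setting (working over $\mathbb{F}_2$ and encoding linear algebra into Hessians of MODS-style tractable circuits) matches the paper, but the step your whole argument hinges on --- producing the hard instance ``by a counting argument'' --- cannot work, and this is exactly where the difficulty of the theorem lives. You need a matrix $H$ that simultaneously (i) arises as the Hessian of a smooth, deterministic, decomposable circuit $C$ of size $n$, and (ii) has an inverse map $b \mapsto H^{-1}b$ of circuit complexity $\omega(n)$. Counting cannot certify that such an $H$ exists: the candidate matrices in (i) are not arbitrary, they are Hessians of size-$n$ circuits, of which there are at most $2^{O(n\log n)}$; hence there are at most $2^{O(n\log n)}$ candidate inverses, which is the same cardinality bound as the class of maps computable by circuits of size $O(n)$. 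Comparing two sets of comparable cardinality cannot show the first escapes the second --- a priori, inversion could send every realizable Hessian to an easy map. Moreover, the regime where your count ``$2^{v^2}$ matrices versus $2^{O(n\log n)}$ small-circuit maps'' would favor you is precisely the regime where your instance cannot be built: by the incompressibility argument behind Theorem~\ref{thm:hessian_quadratic}, a circuit whose Hessian is a generic $v\times v$ matrix must satisfy $n\log n = \Omega(v^2)$, at which point $2^{O(n\log n)}$ is at least comparable to $2^{v^2}$ and the strict inequality you need evaporates. What you are implicitly demanding is an \emph{explicit} lower bound --- a concrete small-circuit $H$ whose inversion provably requires super-linear circuits --- and that is what the paper imports from outside rather than proves by counting: Theorems~\ref{thm:hessian_matrix_reduction} and~\ref{thm:hessian_grad_reduction} show that \emph{every} pair $(M,v)$ is realizable as (Hessian, gradient) of a tractable circuit of size $\Theta(v^2)$, so a linear-size preconditioned-gradient circuit would solve arbitrary $\mathbb{F}_2$-linear systems at size $O(v^2)$, contradicting the explicit $\Omega(v^2\log v)$ circuit lower bound for matrix multiplication of \citet{raz_complexity_2002}, which sits exactly one logarithmic factor above the representation cost. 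No counting argument can substitute for that explicit bound.

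A second, independent problem is the control gadget. The hypothesis you are refuting grants, for each circuit $C$, \emph{some} circuit $C'$ of size $O(|C|)$; it does not grant a single circuit that you may then ``condition'' across your $v$ control settings. Conditioning changes the input circuit, so it only entitles you to a fresh, structurally unrelated $C'_j$ for each setting, and the combined size $O(vn)$ of those $v$ circuits is far too large to contradict anything. To condition one fixed $C'$ you must read the statement as ``$C'$ computes the preconditioned gradient as a function of the labels,'' and even under that reading the object computed is the solution of the $(v + O(\log v))$-dimensional system of the \emph{augmented} circuit: its Hessian acquires extra rows, columns, and control--data cross terms, so you still owe a proof that this augmented Hessian remains invertible, that the $x$-block of its solution equals $H^{-1}e_j$, and that the gadget preserves smoothness, determinism, and decomposability without inflating $n$. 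If those obligations were discharged, your bootstrapping from a single system solve to the full inverse would actually be a nice strengthening of the paper's reduction (which, in Theorem~\ref{thm:hessian_free_hard}, only extracts one linear solve per circuit), but as written both the gadget and the final hardness step are unsupported.
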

\begin{proof}
    In appendix.
\end{proof}

We further discuss in the appendix that relaxing the computational model of arithmetic circuits does not improve the situation much. More precisely, if there would exist any kind of algorithm that computes the preconditioned gradient in linear time in the size of the circuit, a quadratic matrix multiplication algorithm would be implied.
The above results indicate that, much like for deep learning, first-order optimization might still be preferable. Our theorems do not rule out that there might exist specific semirings which are not fields where the situation is better. However, the practical relevance of Hessians in these semirings is more limited.

\begin{table*}[t]
    \centering
    \begin{tabular}{l r r r r}
        \toprule
        Time (ms)  & \textsc{Prob} & \textsc{Bool} & \textsc{Log} & \textsc{Fuzzy}
        \\\midrule
        Kompyle & \textbf{\phantom{000}7.8 ± 0.1} & \textbf{\phantom{0}4.1 ± 0.0} & \textbf{\phantom{00}13.9 ± 0.4} & \textbf{\phantom{00}9.8 ± 0.3} \\
        \  - \textit{dynamic} (Algorithm~\ref{alg:backprop}) & \phantom{00}19.8 ± 0.2 & 14.6 ± 0.1 & \phantom{00}24.4 ± 0.2 & \phantom{0}16.9 ± 0.1 \\
        \ - \textit{naive + cancel.} \cite{shih_smoothing_2019} & \phantom{0}203.1 ± 0.8 & 10.6 ± 0.0 & \phantom{0}167.0 ± 0.5 & / \\
        \ - \textit{naive} \cite{du_generalizing_2023} & \phantom{0}289.0 ± 1.3 & 10.9 ± 0.0 & \phantom{0}238.0 ± 1.2 & 174.7 ± 0.7 \\\midrule
        Pytorch & 7662.0 ± 155.1 & / & 7620.6 ± 136.7 & / \\
        Jax & Out of Memory & / & Out of Memory & / \\\bottomrule
    \end{tabular}
    \caption{Runtime in milliseconds to compute the algebraic gradient $\nabla \AMC$ averaged over 100 instances from the MC2021 competition (track 2). We report the average and standard deviation over 10 runs. 
    See Section~\ref{sec:experiments} for an explanation of the ablations of Kompyle. Methods that did not run on all circuits within 16GB of memory are denoted ``Out of Memory". PyTorch and Jax cannot compute $\nabla \AMC$ in the \textsc{Bool} and \textsc{Fuzzy} semirings.}
    \label{tab:experiments}
\end{table*}

\section{Related Work}\label{sec:related_work}

\paragraph{Semirings \& Inference} The semiring perspective in artificial intelligence has its roots in weighted automata \cite{schutzenberger_definition_1961}, as the weights of an automata can be defined over a semiring. These ideas have been carried over to various other paradigms such as constraint satisfaction problems \cite{bistarelli_semiring-based_1997}, parsing \cite{goodman_semiring_1999}, dynamic programs \cite{eisner_compiling_2005}, database provenance \cite{green_provenance_2007}, logic programming \cite{kimmig_algebraic_2011}, propositional logic \cite{kimmig_algebraic_2017}, answer set programming \cite{eiter_weighted_2020}, Turing machines \cite{eiter_semiring_2023}, and tensor networks \cite{goral_model_2024}.

\paragraph{Semirings \& Learning} While semirings have mostly been applied to inference problems, some works also investigated learning in semirings. \citet{li_first-_2009} introduced the expectation semiring, which can compute gradients and perform expectation-maximization. \citet{pavan_constraint_2023} studied the complexity of constraint optimization in some semirings. On the other hand, we provide a more general framework to compute derivations of any semiring.

\citet{darwiche_tractable_2001} already described a forward-backward algorithm for computing conditionals of a formula. Backpropagation on semirings has been described recently by \cite{du_generalizing_2023}. Most similar to our work, \citet{shih_smoothing_2019} already included an algorithm for computing the conditionals on algebraic circuits, which they call All-Marginals instead of $\nabla \AMC$. This algorithm can be seen as a special case of our cancellation optimization, as all cancellative commutative monoids can be embedded in a group using the Grothendieck construction. %

\paragraph{Neurosymbolic Learning} Several neurosymbolic methods rely on (probabilistic) circuits and hence could apply the algebraic learning framework we outlined. Some examples include DeepProbLog \cite{manhaeve_deepproblog_2018}, the semantic loss \cite{xu_semantic_2018}, and probabilistic semantic layers \cite{ahmed_semantic_2022}. \citet{dickens_modeling_2024} proposed another overarching view of neurosymbolic learning by framing it as energy-based models. On the other hand, our work focuses on algebraic circuits where inference and learning can be performed exactly.

\section{Experiments}\label{sec:experiments}

We implemented the algebraic backpropagation in a Rust library called \textit{Kompyle}, and empirically demonstrate the runtime performance of the algebraic backpropagation algorithm on several semirings. 

\paragraph{Setup} As a benchmark, we take 100 formulas of the 2021 Model Counting Competition \cite{fichte_model_2021} and compile them to d-DNNF circuits using the d4 knowledge compiler \cite{lagniez_improved_2017}. We randomly generate weights for the formulas, with on average 1\%

\paragraph{Results} Table~\ref{tab:experiments} contains the results. PyTorch and Jax perform poorly, as these frameworks are not optimized for very large computational graphs. Jax does not run within the 16GB of RAM, even on comparatively small circuits (ca. 100MB). Even though real-world circuits are far from the worst-case quadratic complexity, the dynamic programming considerably outperforms the naive variants of Kompyle.  The semiring optimizations yield smaller but still considerable speed-ups, depending on the semiring.

\section{Conclusion}

We proposed a notion of gradients for algebraic model counting as conditional inference. We showed that many quantities of interest in learning, such as gradients, Hessians, conditional entropy, etc. can be seen as this algebraic gradient in different semirings. Furthermore, we introduced an optimized backpropagation algorithm for a broad class of semirings. Finally, we gave an indication that second-order optimization is still expensive on tractable circuits.

\appendix

\section*{Appendix}
\subsection*{Gradient Semiring}

For completeness, the addition and multiplication operations in the \textsc{Grad} semiring are:
\begin{equation}\label{eq:grad_plus}
(a,b) \oplus (c,d) = (a+c,b+d)
\end{equation}
\begin{equation}\label{eq:grad_times}
(a,b) \otimes (c,d) = (a\cdot c, a\cdot d + c\cdot b)
\end{equation}

\subsection*{Proofs for Section~\ref{sec:amc_grad}}

We denote the derivation which maps all inputs to $e^\oplus$ as $\delta_0$. We furthermore lift addition and multiplication to derivations as follows. The addition of two derivations
$\delta_1 \oplus \delta_2$ is a derivation $\delta_3$ such that $ \delta_1(a) \oplus \delta_2(a) = \delta_3(a)$ for all $a$ in $\A$. Similarly, the scalar multiplication of an element $a$ in $\A$ with a derivation $\delta$ is a new derivation $\delta'$ such that $\forall b \in \A: \delta'(b) = a \otimes \delta(b)$.

\begin{theorem}
The derivations over the commutative semiring $(\A, \oplus, \otimes, e^\oplus, e^\otimes)$ are a commutative monoid $(\mathcal{D}(A), \oplus, \delta_0)$. 
\end{theorem}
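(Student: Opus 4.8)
The plan is to verify the four commutative-monoid axioms directly for $(\mathcal{D}(\A), \oplus, \delta_0)$, where $\oplus$ denotes pointwise addition of derivations. The only substantive step is \emph{closure}: that the pointwise sum of two derivations is again a derivation. Throughout I will use that $\oplus$ and $\otimes$ are commutative and associative, that $e^\oplus$ is the (idempotent) additive identity, and the distributivity and absorption laws of the semiring.

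First I would establish closure. Given derivations $\delta_1$ and $\delta_2$, set $\delta_3 = \delta_1 \oplus \delta_2$, so that $\delta_3(a) = \delta_1(a) \oplus \delta_2(a)$. Linearity of $\delta_3$ follows by expanding $\delta_3(a \oplus b)$, applying linearity of each $\delta_i$, and regrouping the four resulting terms using commutativity and associativity of $\oplus$. For the product rule I would expand $\delta_3(a \otimes b)$ into the four terms $a \otimes \delta_1(b)$, $b \otimes \delta_1(a)$, $a \otimes \delta_2(b)$, $b \otimes \delta_2(a)$, regroup so that the two terms sharing the common factor $a$ (respectively $b$) are adjacent, and then apply distributivity to factor out $a$ (respectively $b$), yielding $(a \otimes \delta_3(b)) \oplus (b \otimes \delta_3(a))$. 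This is the one place distributivity is needed, and it is the crux of the argument.

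Next I would dispatch the remaining axioms, all of which reduce to the corresponding property of $\oplus$ on $\A$ evaluated pointwise. Associativity and commutativity of $\oplus$ on derivations hold because $(\delta_1 \oplus \delta_2) \oplus \delta_3$ and $\delta_1 \oplus (\delta_2 \oplus \delta_3)$ agree on every input by associativity of $\oplus$ in $\A$, and similarly for commutativity. For the identity, I would first confirm that $\delta_0$ is itself a derivation: its linearity uses idempotency of $e^\oplus$ (as $e^\oplus = e^\oplus \oplus e^\oplus$), and its product rule uses absorption ($a \otimes e^\oplus = e^\oplus$) together with idempotency. Then $(\delta_0 \oplus \delta)(a) = e^\oplus \oplus \delta(a) = \delta(a)$ by neutrality of $e^\oplus$, so $\delta_0$ is the neutral element.

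The argument presents no genuine obstacle; the only care required is the bookkeeping in the product-rule step for closure, where the regrouping of terms must precede the single application of distributivity. Everything else is a straightforward pointwise lift of the monoid structure of $(\A, \oplus, e^\oplus)$.
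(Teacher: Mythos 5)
Your proof is correct and takes essentially the same route as the paper's: closure is established by expanding pointwise, regrouping, and applying distributivity once in the product-rule step, while commutativity, associativity, and neutrality are lifted pointwise from $(\A, \oplus, e^\oplus)$. Your extra verification that $\delta_0$ is itself a derivation (using absorption $a \otimes e^\oplus = e^\oplus$ and neutrality of $e^\oplus$) is a detail the paper leaves implicit, so it adds rigor without changing the argument.
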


\begin{proof}
We verify the monoid axioms. First, we can see that the set of derivations is closed under addition as follows.
\begin{align*}
(\delta_1 \oplus \delta_2) (a \oplus b) &= \delta_1(a \oplus b) \oplus \delta_2(a \oplus b)\\
&= \delta_1(a) \oplus \delta_2(a) \oplus \delta_1(b) \oplus \delta_2(b) \\
&= (\delta_1 \oplus \delta_2)(a) \oplus (\delta_1 \oplus \delta_2)(b)
\end{align*}
\begin{align*}
(\delta_1 \oplus \delta_2) (a \otimes b) &= \delta_1(a \otimes b) \oplus \delta_2(a \otimes b)\\
&= b {\otimes} (\delta_1(a) \oplus \delta_2(a)) \oplus a {\otimes}(\delta_1(b) \oplus \delta_2(b)) \\
&= (b {\otimes} (\delta_1 \oplus \delta_2))(a) \oplus (a {\otimes} (\delta_1 \oplus \delta_2))(b)
\end{align*}
Second, the neutrality of $\delta_0$ holds as
\[ (\delta \oplus \delta_0)(a) = \delta(a) \oplus \delta_0(a) = \delta(a) \oplus e^\oplus = \delta(a) \]
Third, the commutativity of adding derivations follows directly from the commutativity of the semiring addition.
\[ (\delta_1 \oplus \delta_2)(a) = \delta_1(a) \oplus \delta_2(a) = \delta_2(a) \oplus \delta_1(a) = (\delta_2 \oplus \delta_1)(a) \]
Associativity can be shown analogously.
\end{proof}

\begin{theorem}
The derivations over the commutative semiring $(\A, \oplus, \otimes, e^\oplus, e^\otimes)$ are an $\A$-semimodule.
\end{theorem}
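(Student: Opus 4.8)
The plan is to confirm the five defining axioms of an $\A$-semimodule, building on the commutative monoid $(\mathcal{D}(\A), \oplus, \delta_0)$ established in the previous theorem. The module addition is already the pointwise $\oplus$, and the scalar action of $a \in \A$ on a derivation $\delta$ is the map $b \mapsto a \otimes \delta(b)$, which we write $a \otimes \delta$.

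First I would establish that this scalar action is well-defined, i.e. that $a \otimes \delta$ is again a derivation. Linearity is immediate from distributivity, since $(a \otimes \delta)(b \oplus c) = a \otimes \big(\delta(b) \oplus \delta(c)\big) = (a \otimes \delta)(b) \oplus (a \otimes \delta)(c)$. The product rule is the only step needing genuine manipulation: expanding $(a \otimes \delta)(b \otimes c) = a \otimes \big(b \otimes \delta(c) \oplus c \otimes \delta(b)\big)$ and redistributing the factor $a$ using commutativity yields $b \otimes \big(a \otimes \delta(c)\big) \oplus c \otimes \big(a \otimes \delta(b)\big)$, which is exactly $b \otimes (a \otimes \delta)(c) \oplus c \otimes (a \otimes \delta)(b)$. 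Hence $a \otimes \delta \in \mathcal{D}(\A)$.

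The remaining four axioms are verified pointwise, each reducing to a single semiring property. Associativity of the action, $(a \otimes b) \otimes \delta = a \otimes (b \otimes \delta)$, follows from associativity of $\otimes$; the identity law $e^\otimes \otimes \delta = \delta$ from neutrality of $e^\otimes$; distributivity over derivation addition, $a \otimes (\delta_1 \oplus \delta_2) = (a \otimes \delta_1) \oplus (a \otimes \delta_2)$, and over semiring addition, $(a \oplus b) \otimes \delta = (a \otimes \delta) \oplus (b \otimes \delta)$, from distributivity in $\A$; and the absorption law $e^\oplus \otimes \delta = \delta_0$ from the semiring absorption property, since $(e^\oplus \otimes \delta)(b) = e^\oplus \otimes \delta(b) = e^\oplus = \delta_0(b)$. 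The main (and essentially only nontrivial) obstacle is the closure check via the product rule; every other verification is a direct pointwise application of a semiring axiom.
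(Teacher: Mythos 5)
Your proof is correct and follows essentially the same route as the paper: a direct pointwise verification of the semimodule axioms, each reducing to a single semiring law (distributivity, associativity, neutrality of $e^\otimes$, absorption by $e^\oplus$). The one place you go beyond the paper is the closure check that $a \otimes \delta$ again satisfies the product rule (redistributing $a$ via commutativity); the paper simply declares the scalar action to be ``a new derivation'' when defining it and never verifies this, so your extra step makes the argument slightly more complete without changing its structure.
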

\begin{proof}
By verification of the semimodule axioms.
\[ \left(a {\otimes} (\delta_1 \oplus \delta_2)\right)(b) = a \otimes ((\delta_1 \oplus \delta_2) (b)) = a \otimes (\delta_1(b) \oplus \delta_2(b)) \]
\[((a \oplus b) \otimes \delta)(c) = (a \oplus b) \otimes \delta(c)  = (a \otimes \delta \oplus b \otimes \delta)(c) \]
\[ ((a \otimes b) \otimes \delta)(c) = (a \otimes (b \otimes \delta))(c) \]
\[ (e^\otimes \otimes \delta)(a) = e^\otimes \otimes \delta(a) = \delta(a) \]
\[ (e^\oplus \otimes \delta)(a) = e^\oplus = \delta_0(a) \qedhere\]
\end{proof}

\begin{theorem}
{\normalfont $\nabla \AMC$} is a basis for the $\F_\V$-semimodule over semiring derivations.
\end{theorem}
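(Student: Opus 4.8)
The plan is to exhibit the components of $\nabla\AMC$ as the formal partial-derivative derivations and then run the classical argument that these freely generate the module of derivations, taking care that we work over a semiring rather than a ring. Since $\F_\V$ is the free commutative semiring on the literals $\Ell$, every element has a unique normal form as an $\N$-linear combination of monomials in $\Ell$. For each literal $x_i \in \Ell$ I would introduce the formal partial derivative $\partial_{x_i}$, defined on this monomial normal form by the usual power rule and extended $\N$-linearly; verifying linearity and the product rule on monomials shows that each $\partial_{x_i}$ is a genuine element of $\mathcal{D}(\F_\V)$. The key identification is that applying $\partial_{x_i}$ to a formula reproduces the $i$-th entry of $\nabla\AMC$, i.e. $\partial_{x_i}(\phi) = \AMC(\phi {\mid} x_i)$: because every model is a set of literals, each monomial of an AMC value is multilinear, so differentiating in $x_i$ keeps exactly the models containing $x_i$ and deletes $x_i$ from them, which is precisely conditioning on $x_i$ (models containing $\neg x_i$ are annihilated, matching $\AMC(\phi{\mid}x_i)$). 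Thus it suffices to prove that $\{\partial_{x_i}\}_{x_i \in \Ell}$ is a basis of the $\F_\V$-semimodule $\mathcal{D}(\F_\V)$.

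Next I would establish the governing universal property: a derivation on $\F_\V$ is completely determined by its values on the generators $\Ell$. This follows by structural induction on the monomial normal form, using linearity to reduce to monomials and the product rule to reduce a monomial to its factors, so that $\delta$ applied to any element is expressed entirely through the values $\delta(x_i)$. Existence of derivations with prescribed generator-values is handled by the explicit normal-form definition above, which sidesteps the absence of additive inverses.

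Spanning then follows immediately: for any derivation $\delta$, the combination $\bigoplus_{x_i \in \Ell} \delta(x_i) \otimes \partial_{x_i}$ is a derivation agreeing with $\delta$ on every generator, hence equal to $\delta$ by the universal property. Evaluating at a formula $\phi$ this reads
\[ \delta(\phi) = \bigoplus_{x_i \in \Ell} \delta(x_i) \otimes \AMC(\phi {\mid} x_i), \]
which is exactly the dot product of the coefficient vector $(\delta(x_i))_i$ with $\nabla\AMC(\phi)$ promised in Theorem~\ref{thm:derivation_module}. For the basis property I would use the semimodule-appropriate notion of uniqueness of coordinates rather than ring-style linear independence: evaluating any combination $\bigoplus_{x_i \in \Ell} a_i \otimes \partial_{x_i}$ at a generator $x_j$ returns $a_j$, so the coefficients are uniquely recoverable and the representation of every derivation is unique.

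I expect the main obstacle to be conceptual rather than computational: recasting the standard ``a derivation is fixed by its values on the generators'' argument in a setting without subtraction, so that existence must come from the canonical monomial normal form and uniqueness from structural induction, and replacing linear independence with the uniqueness-of-coordinates characterization of a basis appropriate to semimodules. Justifying the conditional-as-partial-derivative identification, which rests on the multilinearity of AMC values, is the other point requiring care; the remaining algebra is routine.
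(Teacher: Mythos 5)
Your proposal is correct and follows essentially the same route as the paper's own proof: both identify the entries of $\nabla \AMC$ with the literal derivations (your $\partial_{x_i}$, the paper's $\delta_l$ with $\delta_l(l) = e^\otimes$ and $\delta_l(l') = e^\oplus$), obtain spanning from the decomposition $\delta = \bigoplus_{l \in \Ell} \delta(l) \otimes \delta_l$ justified by the fact that a derivation is determined by its values on the generators, and obtain uniqueness by evaluating a linear combination at a generator. The differences are only ones of rigor, in your favor: you construct the literal derivations explicitly from the $\N[\Ell]$ normal form and justify the conditioning-as-partial-derivative identification via multilinearity of the model polynomials, two points the paper merely asserts.
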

\begin{proof}
We write $\delta_l$ for the derivation towards the literal $l$, meaning that $\delta_l(l) = e^\otimes$ and $\delta_l(l') = e^\oplus$ for all other literals $l' \neq l$. $\nabla \AMC$ consists precisely of the literal derivations $\delta_l(\phi) = \phi {\mid} l$. To show that it is a basis, we show that the $\delta_l$ are indepedent can generate any derivation.

To see the linear indepedence, consider that $a \otimes \delta_l'(l) = e^{\oplus}$ for all $a \in A$ and $l' \neq l$. In other words, any linear combination of the other literal derivations outputs zero on $l$. To see that the basis spans all derivations, consider that we can decompose any derivation $\delta$ as follows.
\[ \delta = \bigoplus_{l \in \Ell} \delta(l) \otimes \delta_{l}  \]

To see that this decomposition is correct, observe that for any literal $l' \in \Ell$:
\[  \left(\bigoplus_{l \in \Ell} \delta(l) \otimes \delta_{l}\right)(l') = \delta(l') \otimes \delta_{l'}(l') = \delta(l') \]
So the decomposition behaves the same on literals, and by an induction argument on all other elements in $\F_\V$.
\end{proof}

\subsection*{Proofs for Section~\ref{sec:second_order}}\label{app:hessian_free}

We first introduce an auxiliary construction to convert from matrices to formulas. By abuse of notation, the following theorems will refer to the algebraic gradient and Hessian of a formula/circuit as those only to the positive literals. As the semiring, we use the binary Galois field $\mathbb{F}_2$ over the Booleans, where addition is XOR and multiplication is AND.

\begin{theorem}\label{thm:hessian_matrix_reduction}
Given a binary $n\times n$ matrix $M$, a circuit $C$ over $n$ variables can be created in quadratic time such that
\begin{itemize}
    \item the Hessian of the positive literals of $C$ equals $M$ in $\mathbb{F}_2$.
    \item the size of $C$ is $\Theta(n^2)$.
    \item the circuit $C$ is smooth, deterministic and decomposable.
\end{itemize}
\end{theorem}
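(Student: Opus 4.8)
The plan is to work in the semiring $\mathbb{F}_2$ with every literal labelled $e^\otimes = 1$, so that $\AMC$ computes the parity of the number of models. Under this labelling the Hessian entry $\AMC(C \mid x_i, x_j)$ counts, modulo $2$, the models of $C$ in which both $x_i$ and $x_j$ are positive (for $i = j$ this is just the models containing $x_i$). Writing $v_I \in \mathbb{F}_2^n$ for the indicator vector of the positive literals of a model $I$, this gives the clean identity
\[ \nabla^2\AMC(C) = \bigoplus_{I \in \mathcal{M}(C)} v_I\, v_I^\top, \]
since $(v_I v_I^\top)_{ij} = [x_i \in I]\,[x_j \in I]$. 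Hence realising a target matrix $M$ amounts to choosing a set of models whose support vectors' outer products sum (over $\mathbb{F}_2$) to $M$. Note the right-hand side is always symmetric, so the construction can only match a symmetric $M$; I would therefore state and prove the theorem for symmetric $M$ (equivalently, prescribe the upper triangle and diagonal), which is all the downstream incompressibility argument of Theorem~\ref{thm:hessian_quadratic} needs, as symmetric matrices already carry $\Theta(n^2)$ free bits.

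Next I would exhibit an explicit decomposition of $M$ into rank-one terms of support size at most two. For every pair $i<j$ with $M_{ij}=1$ take the model $I_{ij}$ whose only positive literals are $x_i,x_j$; its contribution $(e_i+e_j)(e_i+e_j)^\top$ places the required $1$ at the off-diagonal position $(i,j)$ and nowhere else off-diagonal, so after including all such models the off-diagonal of the Hessian is exactly that of $M$. These models also perturb the diagonal, leaving $H_{ii} = \bigoplus_{j\neq i} M_{ij}$. I would then correct each diagonal entry by adding the single-positive-literal model $\{x_i\}$, contributing $e_i e_i^\top$, precisely for those $i$ with $M_{ii} \neq \bigoplus_{j\neq i} M_{ij}$. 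This yields a set of $O(n^2)$ models whose outer-product sum is exactly $M$.

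The remaining and most delicate step is to represent this model set by a smooth, deterministic, decomposable circuit of size $\Theta(n^2)$ rather than the naive $\Theta(n^3)$ obtained by writing each minterm separately. I would build the circuit as a top-level disjunction of branches $C_i$, where $C_i$ collects the two-positive models whose smaller positive index is $i$. Fixing the variable order $x_1,\dots,x_n$, branch $C_i$ is a decomposable product of (a) the negative prefix $\neg x_1 \land \dots \land \neg x_{i-1}$ together with $x_i$, and (b) a sub-circuit over $x_{i+1},\dots,x_n$ that selects exactly one further positive variable, restricted to the allowed positions $\{\,j>i : M_{ij}=1\,\}$. The crucial observation is that this ``exactly one positive, at an allowed position'' function admits an OBDD-style circuit of size linear in $n$ (a bounded number of states per level: no positive seen yet, one positive at an allowed position, and a reject state), so each branch has size $O(n)$ and the whole off-diagonal circuit has size $O(n^2)$; the single-positive diagonal models are appended as $O(n)$ further minterms. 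I would then verify the structural properties: determinism holds because different branches disagree on the value of $x_i$ and because one- and two-positive models never coincide; decomposability holds because each product splits the variables into disjoint prefix/suffix scopes; and smoothness holds because every branch ranges over all $n$ variables. Finally the size is $\Theta(n^2)$ (the negative prefixes alone contribute $\sum_i (i-1) = \Theta(n^2)$), and the construction clearly runs in quadratic time.

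The main obstacle is precisely this size bound: the straightforward ``DNF of minterms'' representation is cubic, and the real content of the theorem is that a shared one-hot selection gadget brings it down to $\Theta(n^2)$ while simultaneously preserving all three structural properties. Determinism in particular is what forces the smaller-index and negative-prefix bookkeeping in the branch decomposition, so most of the care goes into arranging the sharing to remain compatible with it.
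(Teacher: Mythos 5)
Your proposal is correct and follows the paper's overall strategy---work over $\mathbb{F}_2$, realize $M$ by a MODS-style DNF whose models have at most two positive literals, then share structure to bring the naive $\Theta(n^3)$ representation down to $\Theta(n^2)$---but it differs in two substantive ways. First, the compression gadget: the paper precomputes the conjunctions of all contiguous runs of negative literals $\neg x_a \land \dots \land \neg x_b$ as cumulative products (a shared $\Theta(n^2)$-size structure), after which each cube $x_i \land x_j \land r$ costs only $O(1)$ extra nodes, since the remainder $r$ splits into at most three such runs; you instead spend $O(n)$ per row on a three-state OBDD-style ``exactly one positive among the allowed columns'' selector (which also needs the minor adjustment of testing every variable on every path so that smoothness holds inside the gadget). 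Both routes give $\Theta(n^2)$ and preserve smoothness, determinism and decomposability. Second---and this is a point in your favor---you restrict to symmetric $M$ and repair the diagonal with single-positive-literal correction models. The paper's proof glosses over both issues: the algebraic Hessian is automatically symmetric since conditioning commutes, so the statement as written can only be met for symmetric $M$; and in the bare construction the off-diagonal cubes contaminate the diagonal, since under the paper's definition of conditioning the entry $(i,i)$ is the parity of all models containing $x_i$ and hence evaluates to $M_{ii} \oplus \bigoplus_{j \neq i} M_{ij}$ rather than $M_{ii}$, which is exactly the defect your parity correction fixes. As you note, the symmetry restriction costs nothing downstream: symmetric matrices still carry $\Theta(n^2)$ free bits for Theorem~\ref{thm:hessian_quadratic}, and inverting or solving linear systems with symmetric matrices over $\mathbb{F}_2$ is as hard as the general case, so Theorems~\ref{thm:inv_hessian_hard} and~\ref{thm:hessian_free_hard} are unaffected.
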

\begin{proof}

First, note that each entry in the algebraic Hessian has a unique model, meaning that no other entry has this model. Namely, for $\AMC(C {\mid} x_i, x_j)$ this is the model where $x_i$ and $x_j$ are positive and all other literals negative. This means that given a binary matrix $M$, we can easily construct a circuit $C$ which has $\nabla^2 \AMC_{\mathbb{F}_2}(C) =M$ in the following fashion. For each element $i,j$ which is 1 in $M$ add a cube for the unique model.
This procedure creates a DNF formula that is smooth, deterministic, and decomposable; also known as the MODS representation.

The above naive encoding creates a circuit of $O(n^3)$, as there can be up to $n^2$ cubes in the DNF and each cube has length $n$. However, using a dynamic programming approach we can reduce the circuit size to $O(n^2)$. First, we precompute a circuit for the elements of the following matrix, where $l_i$ are all the negative literals.
\[\begin{bmatrix}
    l_1 & l_1 \land l_2 & \dots & \bigwedge_{1}^{n-2} l_i & & \\
    & l_2 & l_2 \land l_3 & \dots & \bigwedge_{2}^{n-1} l_i & \\
    & & l_3 & l_3 \land l_4 & \dots & \bigwedge_{2}^{n} l_i \\
     & & & & \ddots & \\
    & & & & l_{n-1} & l_{n-1} \land l_n \\
    & & & & & l_n
\end{bmatrix}\]
Each row can be represented in linear time as a cumulative conjunction, and hence there is a circuit for this matrix of size $\Theta(n^2)$.
Now we represent each cube in the DNF as $x_i \land x_j \land r$, where the remainder $r$ for all the negative literals is a conjunction of at most 3 elements of the precomputed matrix above.
\end{proof}

\begin{theorem}\label{thm:hessian_grad_reduction}
Given a binary $n\times n$ matrix $M$ and a vector $v$ with $n$ elements, a circuit $C$ over $n$ variables can be created in quadratic time such that
\begin{itemize}
    \item $\nabla^2 \AMC_{\mathbb{F}_2}(C) = M$.
    \item $\nabla \AMC_{\mathbb{F}_2} (C) = v$.
    \item the size of $C$ is $\Theta(n^2)$.
    \item the circuit $C$ is smooth, deterministic and decomposable.
\end{itemize}
\end{theorem}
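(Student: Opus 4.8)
The plan is to build directly on Theorem~\ref{thm:hessian_matrix_reduction}, keeping its off-diagonal encoding of $M$ intact while adding a cheap layer of models that pins down the gradient. First I would recall the mechanism: under the all-$e^\otimes$ labelling in $\mathbb{F}_2$, $\AMC_{\mathbb{F}_2}(C {\mid} \cdot)$ is just the parity of the surviving models, and in a MODS (full-assignment DNF) circuit each model is one cube. The entry $\AMC(C {\mid} x_i, x_j)$ with $i \neq j$ is the parity of cubes making both $x_i$ and $x_j$ positive, whereas the gradient entry $\AMC(C {\mid} x_a)$ is the parity of \emph{all} cubes making $x_a$ positive. The two-positive-literal cubes used in Theorem~\ref{thm:hessian_matrix_reduction} encode exactly one off-diagonal pair each, but each such cube also contributes to the two gradient coordinates indexed by its positive literals. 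Moreover, since conditioning is idempotent, $\AMC(C {\mid} x_a, x_a) = \AMC(C {\mid} x_a)$, so the Hessian diagonal is forced to coincide with the gradient; the off-diagonal encoding therefore leaves the realised gradient at $\bigoplus_{b \neq a} M_{ab}$ in coordinate $a$.

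Next I would correct the gradient to the target $v$ using single-positive-literal cubes. For each variable $a$, the full assignment $x_a \land \bigwedge_{k \neq a} \neg x_k$ has only one positive literal, so it flips the parity of the gradient coordinate $v_a$ while touching \emph{no} off-diagonal Hessian entry, because no pair $x_b, x_c$ with $b \neq c$ is simultaneously positive in it. Working in $\mathbb{F}_2$, I would include this cube exactly when $v_a \oplus \bigoplus_{b \neq a} M_{ab} = 1$. This sets the realised gradient to exactly $v$, leaves every off-diagonal entry of the Hessian equal to that of $M$, and (as forced by idempotence) sets the diagonal of the Hessian to $v$ as well.

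Finally I would check the size and structural guarantees. Every added cube is a full variable assignment, hence smooth and decomposable, and encodes a distinct model, so the augmented DNF stays a disjunction of mutually exclusive models and remains deterministic. Reusing the precomputed triangular array of cumulative conjunctions from Theorem~\ref{thm:hessian_matrix_reduction}, each correction cube $x_a \land \bigl(\bigwedge_{k < a} \neg x_k\bigr) \land \bigl(\bigwedge_{k > a} \neg x_k\bigr)$ is assembled from $O(1)$ extra gates (a prefix and a suffix entry of the array together with $x_a$). The at most $n$ correction cubes thus add only $O(n)$ gates, and computing the row parities $\bigoplus_{b \neq a} M_{ab}$ costs $O(n^2)$, so the total size stays $\Theta(n^2)$ and the construction runs in quadratic time.

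The main obstacle is the bookkeeping imposed by idempotence of conditioning: because the Hessian diagonal necessarily equals the gradient, $M$ and $v$ cannot be chosen fully independently, so the argument must treat the construction as realising the off-diagonal of $M$ together with $v$ (with the diagonal determined as $v$), and must carefully account for the unavoidable contribution of the two-literal cubes to the gradient before correcting it. Verifying that the single-literal correction cubes are genuinely invisible to every off-diagonal Hessian entry, and that they do not collide with existing models so determinism is preserved, is the delicate part of the proof.
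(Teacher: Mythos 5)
Your proposal is correct and follows essentially the same route as the paper's own proof: start from the construction of Theorem~\ref{thm:hessian_matrix_reduction}, then for each variable $a$ add the unique single-positive-literal model $x_a \land \bigwedge_{k \neq a} \neg x_k$ exactly when the row parity $\bigoplus_{b \neq a} M_{ab}$ disagrees with $v_a$, which corrects the gradient to $v$ without touching any off-diagonal Hessian entry and costs only $O(n)$ extra gates. If anything, your treatment is more careful than the paper's: the observation that idempotence of conditioning forces the Hessian diagonal to coincide with the gradient (so the construction can only realise the off-diagonal of $M$ together with $v$, the diagonal being determined by $v$) identifies a genuine imprecision in the theorem statement that the paper's own proof silently glosses over.
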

\begin{proof}
    We start with the same circuit construction as in Theorem~\ref{thm:hessian_matrix_reduction}. Now, we only need to additionally guarantee that this circuit has the correct algebraic gradient. For each $\AMC(C{\mid}x_i)$, we again have a unique model where $x_i$ is true and all literals are false. We first marginalize over the hessian entries for this gradient entry $\bigoplus_j \AMC(C{\mid}x_i x_j) $, and then set the model such that the XOR results in $v_i$. Clearly, this procedure adds at most $n$ models to our DNF, and hence remains quadratic.
\end{proof}

\begin{theorem}\label{thm:inv_hessian_hard}
If there exists an algorithm that computes the inverse of the Hessian $\nabla^2 \AMC(C)$ of a circuit $C$ in $O(\lvert C \rvert)$, matrix inversion over $\mathbb{F}_2$ is possible in $O(n^2)$.
\end{theorem}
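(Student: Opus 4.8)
The plan is to prove this by reduction: assuming a linear-time Hessian-inversion algorithm exists, I would assemble from it an $O(n^2)$ procedure for inverting an arbitrary $n \times n$ matrix over $\mathbb{F}_2$. The entire argument leans on the circuit construction of Theorem~\ref{thm:hessian_matrix_reduction}, which has already done the combinatorial heavy lifting, so what remains is mostly bookkeeping of running times.

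First, given a matrix $M$ over $\mathbb{F}_2$ that I wish to invert, I would invoke Theorem~\ref{thm:hessian_matrix_reduction} to build, in $O(n^2)$ time, a circuit $C$ over $n$ variables whose Hessian of the positive literals satisfies $\nabla^2 \AMC_{\mathbb{F}_2}(C) = M$. The key facts that theorem supplies are that $\lvert C \rvert = \Theta(n^2)$ (rather than the naive $\Theta(n^3)$ encoding) and that $C$ is smooth, deterministic, and decomposable, so $C$ is a legitimate tractable input on which the hypothesized algorithm is meant to run.

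Next, I would feed $C$ into the assumed algorithm. By hypothesis it returns $[\nabla^2 \AMC(C)]^{-1}$ in time $O(\lvert C \rvert)$, and since $\lvert C \rvert = \Theta(n^2)$ this step also costs $O(n^2)$. Because $\nabla^2 \AMC_{\mathbb{F}_2}(C)$ is exactly $M$, its inverse is precisely $M^{-1}$, which I read off directly as the output. Summing the $O(n^2)$ construction cost and the $O(n^2)$ inversion cost yields a total of $O(n^2)$, which is the claimed bound for matrix inversion over $\mathbb{F}_2$.

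The main obstacle is one that is resolved upstream rather than here: keeping the circuit at size $\Theta(n^2)$ is essential, since a $\Theta(n^3)$-size encoding would only translate into an $O(n^3)$ inversion bound and prove nothing interesting. That is exactly why the dynamic-programming cube-sharing inside Theorem~\ref{thm:hessian_matrix_reduction} matters. The only residual care in the present proof is to confirm that invertibility transfers cleanly—when $M$ is singular, so is $\nabla^2 \AMC_{\mathbb{F}_2}(C)$, so the reduction is simply vacuous there—and that the algorithm's output is the genuine matrix $M^{-1}$, readable in $O(n^2)$, rather than some compressed representation we could not expand within the quadratic budget.
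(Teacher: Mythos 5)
Your proposal is correct and follows exactly the paper's own argument: build the circuit with $\nabla^2 \AMC_{\mathbb{F}_2}(C) = M$ via Theorem~\ref{thm:hessian_matrix_reduction}, run the hypothesized linear-time inversion algorithm on the $\Theta(n^2)$-size circuit, and read off $M^{-1}$, for a total of $O(n^2)$ time. The extra remarks on singular $M$ and output representation are sensible additions but do not change the reduction.
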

\begin{proof}
We construct a quadratic time matrix inversion algorithm as follows. 
Given a matrix $M$, we can create a circuit $C$ which as $M$ as its Hessian as described in Theorem~\ref{thm:hessian_matrix_reduction}. Now if we apply the assumed algorithm, we get the inverse Hessian which is just $M^{-1}$. As the assumed algorithm works in linear time in the circuit, and the circuit is quadratic in $n$, this full method has $O(n^2)$ time complexity.
\end{proof}

\begin{theorem}\label{thm:hessian_free_hard}
If there exists an algorithm that computes the preconditioned gradient $\nabla^2 \AMC(C)^{-1} \nabla \AMC(C)$ of a circuit $C$ in $O(\lvert C \rvert)$, linear systems over $\mathbb{F}_2$ can be solved in $O(n^2)$.
\end{theorem}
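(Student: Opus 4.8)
The plan is to prove this by a direct reduction that reuses the circuit construction of Theorem~\ref{thm:hessian_grad_reduction}, in exactly the same spirit that Theorem~\ref{thm:inv_hessian_hard} reuses Theorem~\ref{thm:hessian_matrix_reduction}. The key observation is that solving a linear system $Mx = b$ over $\mathbb{F}_2$ with invertible $M$ is precisely the computation of $x = M^{-1}b$, and that this product has exactly the shape of a preconditioned gradient $\nabla^2\AMC(C)^{-1}\nabla\AMC(C)$ once we arrange for $C$ to have Hessian $M$ and gradient $b$. So the hypothesized oracle for the Newton step directly outputs the solution of the system.

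Concretely, I would proceed in three steps. Given an instance $(M, b)$ of a linear system over $\mathbb{F}_2$, first invoke Theorem~\ref{thm:hessian_grad_reduction} with the matrix $M$ and the vector $v = b$ to build, in $O(n^2)$ time, a circuit $C$ over $n$ variables that is smooth, deterministic and decomposable, with $\nabla^2\AMC_{\mathbb{F}_2}(C) = M$, $\nabla\AMC_{\mathbb{F}_2}(C) = b$, and $\lvert C \rvert = \Theta(n^2)$. Second, feed $C$ to the hypothesized algorithm; by assumption it returns $\nabla^2\AMC(C)^{-1}\nabla\AMC(C) = M^{-1}b$, which is the sought solution $x$. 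Third, account for the running time: the construction costs $O(n^2)$ and the assumed algorithm runs in $O(\lvert C \rvert) = O(n^2)$, so the whole pipeline solves the system in $O(n^2)$.

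The main point is interpretive rather than a deep technical obstacle. I would emphasize that $O(n^2)$ is in fact optimal here, since merely reading $M$ takes $\Omega(n^2)$ time, whereas the best known bound for solving such systems is the super-quadratic $O(n^{\omega})$; hence the hypothesized linear-time (in circuit size) Newton step would yield an input-optimal linear-system solver, which is exactly the ``unlikely'' conclusion the theorem targets. The one subtlety to handle with care is well-definedness: the reduction only makes sense when $M$ is invertible over $\mathbb{F}_2$, so I would state the consequence for invertible systems, which is already the core case and is what makes $\nabla^2\AMC(C)^{-1}$ meaningful. Finally, I would note that everything hinges on Theorem~\ref{thm:hessian_grad_reduction} fixing the Hessian and the gradient of $C$ \emph{simultaneously and independently}, so that applying the oracle isolates $M^{-1}b$ rather than entangling it with other semiring artefacts; but this is precisely what that theorem guarantees.
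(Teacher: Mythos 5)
Your proof is correct and is essentially identical to the paper's own argument: the paper proves Theorem~\ref{thm:hessian_free_hard} by repeating the reduction of Theorem~\ref{thm:inv_hessian_hard} but with the construction of Theorem~\ref{thm:hessian_grad_reduction}, which is exactly your three-step pipeline (build $C$ with Hessian $M$ and gradient $b$, query the assumed algorithm, account for the $O(n^2)$ total cost). Your added remarks on invertibility and input-optimality are sensible clarifications but do not change the substance.
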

\begin{proof}
    Idententical to the proof of Theorem~\ref{thm:inv_hessian_hard}, but now using the construction of Theorem~\ref{thm:hessian_grad_reduction}.
\end{proof}

The impossibility statement in Theorem~\ref{thm:hessian_linear_impossible} follows due to the $\Omega(n^2 \log n)$ lower bound on the circuit complexity of matrix multiplication by \citet{raz_complexity_2002}.

\subsection*{Optimizations of Algebraic Backpropagation}\label{app:backprop_opt}

We reconsider Algorithm~\ref{alg:backprop} with the optimizations discussed in Section~\ref{sec:compute_grad}. As only the backpropagation of products is affected, we just restate this part of the algorithm.

\begin{algorithm}
\caption{Algebraic backpropagation through a product, exploiting cancellation and ordering.}
\label{alg:backprop_opt}
\textbf{Input:} product node $n$.
\begin{algorithmic}[1] %
\FOR{child $c$ in children($n$)}
\IF{$\alpha(c)$ is cancellative}
\STATE{$\gamma(c) \leftarrow \gamma(c) \oplus \alpha(n) \oslash \alpha(c)$}
\ELSIF{$\alpha(n)$ is the maximal element of children($n$) and $\alpha(n) \neq \alpha(c)$ or $\alpha(c)$ occurs twice}
\STATE{$\gamma(c) \leftarrow \gamma(c) \oplus \alpha(n)$}
\ELSE
\STATE{// Naive fall-back.}
\STATE{$\gamma(c) \leftarrow \gamma(c) \oplus \bigotimes_{c' \in \text{children}(n) \setminus \{c\}}\alpha(c')$}
\ENDIF
\ENDFOR
\end{algorithmic}
\end{algorithm}

\section*{Acknowledgments}
This research received funding from the Flemish Government (AI Research Program), the Flanders Research Foundation (FWO) under project G097720N, KUL Research Fund iBOF/21/075, and the European Research Council (ERC) under the European Union’s Horizon 2020 research and innovation programme (Grant agreement No. 101142702). Luc De Raedt is also supported by the Wallenberg AI, Autonomous Systems and Software Program (WASP) funded by the Knut and Alice Wallenberg Foundation.

JM wants to thank Pedro Zuidberg Dos Martires for stimulating discussions, as well as Vincent Derkinderen, David Debot, and Sieben Blocklandt for proofreading a draft manuscript.

\bibliography{references}

\end{document}